\def\a{{\bf a}}
\def\b{{\bf b}}
\def\g{{\bf g}}
\def\p{{\bf p}}
\def\u{{\bf u}}
\def\v{{\bf v}}
\def\w{{\bf w}}
\def\x{{\bf x}}
\def\y{{\bf y}}
\def\I{{\bf I}}
\def\P{{\bf P}}
\def\U{{\bf U}}
\def\0{{\bf 0}}
\def\1{{\bf 1}}
\def\2{{\bf 2}}
\def\3{{\bf 3}}
\def\4{{\bf 4}}
\def\5{{\bf 5}}
\def\6{{\bf 6}}
\def\7{{\bf 7}}
\def\8{{\bf 8}}
\def\9{{\bf 9}}
\def\EB{{\mathbb E}}
\def\RB{{\mathbb R}}
\newtheorem{theorem}{Theorem}
\newtheorem{lemma}{Lemma}
\newtheorem{assumption}{Assumption}
\begin{document}
\title{Fast Asynchronous Parallel Stochastic Gradient Decent}

\author{Shen-Yi Zhao
Wu-Jun Li \\
Department of Computer Science \\
National Key Laboratory for Novel Software Technology \\
Nanjing University, China
}

\maketitle

\begin{abstract}
Stochastic gradient descent~(SGD) and its variants have become more and more popular in machine learning due to their efficiency and effectiveness. To handle large-scale problems, researchers have recently proposed several
parallel SGD methods for multicore systems. However, existing parallel SGD methods cannot achieve satisfactory performance in real applications. In this paper, we propose a fast asynchronous parallel SGD method,
called AsySVRG, by designing an asynchronous strategy to parallelize the recently proposed SGD variant called stochastic variance reduced gradient~(SVRG). Both theoretical and empirical results show that AsySVRG can outperform
existing state-of-the-art parallel SGD methods like Hogwild! in terms of convergence rate and computation cost.
\end{abstract}

\section{Introduction}
Assume we have a set of labeled instances $\left\{(\x_i, y_i)|i=1,\ldots,n \right\}$, where $\x_i \in \RB^p$ is the feature vector for instance $i$, $p$ is the feature size and $y_i \in \left\{1,-1\right\}$ is the class label of $\x_i$. In machine learning, we often need to solve the following regularized empirical risk minimization problem:
\begin{align}\label{problem}
\underset{\w}{\min}~\frac{1}{n} \sum_{i=1}^n f_i(\w),
\end{align}
where $\w$ is the parameter to learn, $f_i(\w)$ is the loss function defined on instance $i$, and often with a regularization term to avoid overfitting. For example, $f_i(\w)$ can be $\log(1+e^{-y_i \x_i^T \w})$ which is
known as the logistic loss, or $\max\left\{0,1-y_i \x_i^T \w\right\}$ which is known as the hinge loss in support vector machine~(SVM). The regularization term can be $\frac{\lambda}{2}\left\|\w\right\|_2^2$, $\lambda \left\|\w\right\|_1$, or some other forms.

Due to their efficiency and effectiveness, stochastic gradient descent~(SGD) and its variants~\cite{DBLP:conf/nips/Xiao09,DBLP:journals/jmlr/DuchiS09,DBLP:conf/nips/RouxSB12,DBLP:conf/icml/Mairal13,DBLP:conf/nips/Johnson013,DBLP:journals/jmlr/Shalev-Shwartz013,DBLP:conf/nips/Nitanda14} have recently attracted much attention to solve machine learning problems like that in~(\ref{problem}). Many works have proved that SGD and its variants can outperform traditional batch learning algorithms such as gradient descent or Newton methods in real applications.

In many real-world problems, the number of instances $n$ is typically very large. In this case, the traditional sequential SGD methods might not be efficient enough to find the optimal solution for (\ref{problem}). On the other hand, clusters and multicore systems have become popular in recent years. Hence, to handle large-scale problems, researchers have recently proposed several distributed SGD methods for clusters and parallel SGD methods for multicore systems. Although distributed SGD methods for clusters like those in~\cite{DBLP:conf/nips/ZinkevichSL09,DBLP:conf/nips/DuchiAW10,DBLP:conf/nips/ZinkevichWSL10} are meaningful to handle very large-scale problems, there also exist a lot of problems which can be solved by a single machine with multiple cores. Furthermore, even in distributed settings with clusters, each machine~(node) of the cluster typically have multiple cores. Hence, how to design effective parallel SGD methods for multicore systems has become a key issue to solve large-scale learning problems like that in (\ref{problem}).

There have appeared some parallel SGD methods for multicore systems. The round-robin scheme proposed in~\cite{DBLP:conf/nips/ZinkevichSL09} tries to order the processors and then each processor update the variables in order. Hogwild!~\cite{recht2011hogwild} is a lock-free approach for parallel SGD. Experimental results in~\cite{recht2011hogwild} have shown that Hogwild! can outperform the round-robin scheme in~\cite{DBLP:conf/nips/ZinkevichSL09}. However, Hogwild! can only achieve a sub-linear convergence rate. Hence, Hogwild! is not efficient~(fast) enough to achieve satisfactory performance.

In this paper, we propose a fast asynchronous parallel SGD method, called AsySVRG, by designing an asynchronous strategy to parallelize the recently proposed SGD variant called stochastic variance reduced gradient~(SVRG)~\cite{DBLP:conf/nips/Johnson013}. The contributions of AsySVRG can be outlined as follows:
\begin{itemize}
\item Two asynchronous schemes, consistent reading and inconsistent reading, are proposed to coordinate different threads. Theoretical analysis is provided to show that both schemes have linear convergence rate, which is faster than that of Hogwild!
\item The implementation of AsySVRG is simple.
\item Empirical results on real datasets show that AsySVRG can outperform Hogwild! in terms of computation cost.
\end{itemize}

%Stochastic gradient decent(SGD) has been one of the most effective methods. Since its slow convergence rate $O(1/\sqrt t)$ and the stepsize needs to get small gradually, it catalyzes an amount of researchs and theoretical studies to improve the performance. Stochastic average gradient(SAG)~\cite{DBLP:conf/nips/RouxSB12}, stochastic dual coordinate decent(SDCA)~\cite{DBLP:journals/jmlr/Shalev-Shwartz013} \cite{DBLP:conf/icml/Shalev-Shwartz014}, stochastic variance reduction gradient(SVRG)~\cite{DBLP:conf/nips/Johnson013} \cite{DBLP:conf/nips/Nitanda14} and so on. All of them get a faster convergence rate than SGD.

\section{Preliminary}

We use $f(\w)$ to denote the objective function in~(\ref{problem}), which means $f(\w) = \frac{1}{n} \sum_{i=1}^n f_i(\w)$. In this paper, we use $\left\| \cdot \right\|$ to denote the $L_2$-norm $\left\| \cdot \right\|_2$
and $\w_*$ to denote the optimal solution of the objective function.

\begin{assumption}\label{L-smooth}
The function $f_i(\cdot)~(i=1,\ldots,n)$ in~(\ref{problem}) is convex and $L$-smooth, which means $\exists L>0$, $\forall \a,\b, $
\begin{align}
f_i(\a) \leq f_i(\b) + \nabla f_i(\b)^T(\a - \b) + \frac{L}{2} \left\| \a-\b \right\|^2, \nonumber
\end{align}
or equivalently
\begin{align*}
 \left\| \nabla f_i(\a)-\nabla f_i(\b) \right\| \leq L \left\| \a-\b \right\|,
\end{align*}
where $\nabla f_i(\cdot)$ denotes the gradient of $f_i(\cdot)$.
\end{assumption}

\begin{assumption}\label{strong}
The objective function $f(\cdot)$ is $\mu$-strongly convex, which means $\exists \mu>0$, $\forall \a, \b,$
\begin{align}
f(\a) \geq f(\b) + \nabla f(\b)^T(\a - \b) + \frac{\mu}{2} \left\| \a-\b \right\|^2, \nonumber
\end{align}
or equivalently
\begin{align*}
 \left\| \nabla f(\a)-\nabla f(\b) \right\| \geq \mu \left\| \a-\b \right\|.
\end{align*}
\end{assumption}

\section{Algorithm}
Assume that we have $p$ processors~(threads) which can access a shared memory, and $\w$ is stored in the shared memory. Furthermore, we assume each thread has access to a shared data structure for the vector $\w$ and has access to choose any instance randomly to compute the gradient $\nabla f_i(\w)$. We also assume consistent reading of $\w$, which
means that all the elements of $\w$ in the shared memory have the same ``age"~(time clock).

Our AsySVRG algorithm is presented in Algorithm~\ref{alg:AsySVRG}. We can find that in the $t^{th}$ iteration, each thread completes the following operations:
\begin{itemize}
\item All threads parallelly compute the full gradient $\nabla f(\w_t) = \frac{1}{n} \sum_{i=1}^n \nabla f_i(\w_t)$. Assume the gradients computed by thread $a$ are denoted by $\phi_a$ which is a subset of $\{\nabla f_i(\w_t)|i=1,\ldots,n\}$. We have $\phi_a \bigcap \phi_b = empty$ if $a\neq b$, and  $\bigcup_{a=1}^p \phi_a=\{\nabla f_i(\w_t)|i=1,\ldots,n\}$.
\item Run an inner-loop in which each iteration randomly chooses an instance indexed by $i_m$ and computes the gradient $\nabla f_{i_m}(\u_{k(m)})$, where $\u_0 = \w_t$, and compute the vector
\begin{align}\label{def:v}
\v_m = \nabla f_{i_m}(\u_{k(m)}) - \nabla f_{i_m}(\u_0)+\nabla f(\u_0).
\end{align}
Then update the vector
\begin{align*}
\u_{m+1} = \u_{m} - \eta \v_{m},
\end{align*}
where $\eta >0$ is a step size.
\end{itemize}

Here, $m$ is the total number of updates on $\w$ from all threads and $k(m)$ is the $\u$-iteration at which the update $\v_m$ was calculated. Since each thread can compute an update and change the $\w$, $k(m) \leq m$
obviously. At the same time, we should guarantee that the update is not too old. Hence, we need $m - k(m) \leq \tau$, where $\tau$ is a positive integer, and usually called the bounded delay. If $\tau = 0$, the algorithm AsySVRG degenerates to the sequential~(single-thread) version of SVRG.

\begin{algorithm}[!htb]
\caption{AsySVRG}
\label{alg:AsySVRG}
\begin{algorithmic}
\STATE Initialization: $p$ threads, initialize $\w_0, \eta$;
\FOR{$t=1,2,...$}
\STATE All threads parallelly compute the full gradient $\nabla f(\w_t) = \frac{1}{n} \sum_{i=1}^n \nabla f_i(\w_t)$;
\STATE $\u_0 = \w_t$;
\STATE For each thread, do:
\FOR{$m=0$ to $M$}
\STATE Pick up an $i_m$ randomly from $\left\{ 1,\ldots,n \right\}$;
\STATE Compute update vector $\v_m = \nabla f_{i_m}(\u_{k(m)}) - \nabla f_{i_m}(\u_0) + \nabla f(\u_0)$;
\STATE $\u_{m+1} = \u_m - \eta \v_m$;
\ENDFOR
\STATE \textbf{Option 1}: Take $\w_{t+1}$ to be the current $\u$ in the shared memory;
\STATE \textbf{Option 2}: Take $\w_{t+1}$ to be the average sum of $\u_m$ generated by the inner loop;
\ENDFOR
\end{algorithmic}
\end{algorithm}

\section{Convergence Analysis}
Our convergence analysis is based on the Option 2 in Algorithm~\ref{alg:AsySVRG}. Please note that we have $p$ threads and let each thread calculate $M$ times of update. Hence, the total times of updates on $\w$ in the shared memory, which is denoted by $\tilde{M}$, must satisfy that $0<\tilde{M}\leq pM$. And obviously, the larger the $M$ is, the larger the $\tilde{M}$ will be.

\subsection{Consistent Reading}
\label{lock}

Since $\w$ is a vector with several elements, it is typically impossible to complete updating $\w_{m+1}$ in an atomic operation. We have to give this step a lock for each thread.  More specifically, it need a lock whenever a thread tries to read $\u$ or update $\u$ in the shared memory. This is called consistent reading scheme.

First, we give some notations as follows:
\begin{align}
\p_{m,i} &= \nabla f_i(\u_m) - \nabla f_i(\u_0) + \nabla f(\u_0),\label{def:pmi} \\
q_m &= \frac{1}{n} \sum_{i=1}^n \left\| \p_{m,i} \right\|^2. \label{def:qm}
\end{align}

It is easy to find that $\v_m = \p_{k(m),i_m}$ and the update of $\w$ can be written as follows:
\begin{align}\label{update rule}
\u_{m+1} = \u_m - \eta \v_m.
\end{align}

One key to get the convergence rate is the estimation of the variance of $\v_m$. We use the technique in \cite{liu2013asynchronous} and get the following result:
\begin{lemma}\label{lemma1}
There exists a constant $\rho > 1$ such that $\EB q_m \leq \rho \EB q_{m+1}$.
\end{lemma}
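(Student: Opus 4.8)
The plan is to control the ratio $\EB q_m/\EB q_{m+1}$ by bounding how much a single update step can decrease the quantity $q$ in expectation. Recall that $q_m = \frac{1}{n}\sum_{i=1}^n \|\p_{m,i}\|^2$ with $\p_{m,i} = \nabla f_i(\u_m) - \nabla f_i(\u_0) + \nabla f(\u_0)$, so passing from $q_m$ to $q_{m+1}$ only changes the argument $\u_m$ to $\u_{m+1} = \u_m - \eta \v_m$. The first step is to expand $\|\p_{m+1,i}\|^2$ around $\|\p_{m,i}\|^2$: writing $\p_{m+1,i} = \p_{m,i} + (\nabla f_i(\u_{m+1}) - \nabla f_i(\u_m))$, expand the square to get $\|\p_{m+1,i}\|^2 = \|\p_{m,i}\|^2 + 2\p_{m,i}^T(\nabla f_i(\u_{m+1}) - \nabla f_i(\u_m)) + \|\nabla f_i(\u_{m+1}) - \nabla f_i(\u_m)\|^2$. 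The cross term is the dangerous one since it can be negative; I would bound it below via Cauchy--Schwarz and $L$-smoothness (Assumption~\ref{L-smooth}) by $-2L\|\p_{m,i}\|\,\|\u_{m+1}-\u_m\| = -2L\eta\|\p_{m,i}\|\,\|\v_m\|$, and then use $2ab \le \beta a^2 + \beta^{-1}b^2$ for a free parameter $\beta>0$ to turn this into $-\beta\|\p_{m,i}\|^2 - \beta^{-1}L^2\eta^2\|\v_m\|^2$.

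Averaging over $i$ and taking expectations, the term $\frac1n\sum_i \|\p_{m,i}\|^2$ gives back $q_m$, so after rearranging I expect an inequality of the form $\EB q_{m+1} \ge (1-\beta)\EB q_m - C\eta^2 \EB\|\v_m\|^2$ for some constant $C = C(\beta,L)$. The second step is then to relate $\EB\|\v_m\|^2$ back to expectations of $q$'s at nearby iterations. Since $\v_m = \p_{k(m),i_m}$ with $i_m$ chosen uniformly and independently, taking expectation over $i_m$ gives $\EB\|\v_m\|^2 = \EB q_{k(m)}$. Because of the bounded-delay assumption $m - k(m) \le \tau$, I would want to argue recursively that $\EB q_{k(m)}$ is comparable to $\EB q_m$ — this is exactly the kind of bootstrap argument used in \cite{liu2013asynchronous}: one posits that $\EB q_j \le \rho\,\EB q_{j+1}$ holds for all $j$ (or proves it by induction on $m$), which gives $\EB q_{k(m)} \le \rho^{\tau}\EB q_m$, and then closes the loop.

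Concretely, substituting $\EB\|\v_m\|^2 = \EB q_{k(m)} \le \rho^\tau \EB q_m$ into the inequality above yields $\EB q_{m+1} \ge (1 - \beta - C\eta^2 \rho^\tau)\,\EB q_m$, i.e. $\EB q_m \le \rho\,\EB q_{m+1}$ provided $\rho\,(1-\beta - C\eta^2\rho^\tau) \ge 1$. The final step is to verify that this fixed-point condition on $\rho$ is satisfiable with some $\rho > 1$: for $\eta$ small enough and $\beta$ small enough, $1 - \beta - C\eta^2\rho^\tau$ is close to $1$, so the equation $\rho(1-\beta-C\eta^2\rho^\tau)=1$ has a root slightly larger than $1$ by a continuity/intermediate-value argument (at $\rho=1$ the left side is $<1$ only if $\beta + C\eta^2 > 0$, which it is — so actually one wants the left side to exceed $1$ for some $\rho>1$, which holds because the derivative in $\rho$ at $\rho=1$ is positive when $\beta + C\eta^2(1+\tau)<1$, guaranteeing the product grows past $1$ before $C\eta^2\rho^\tau$ blows up). I would state the precise smallness condition on $\eta$ (depending on $L$, $\tau$) under which such a $\rho$ exists and record it, since later lemmas will presumably need the same step-size restriction.

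The main obstacle is handling the asynchrony cleanly: the index $k(m)$ is itself random and the ``inductive hypothesis'' $\EB q_j \le \rho \EB q_{j+1}$ that one uses to bound $\EB q_{k(m)} \le \rho^\tau \EB q_m$ is the very statement being proved, so the argument must be organized either as a genuine induction on $m$ (establishing the bound for step $m$ using the bound for all earlier steps, which requires $k(m) \le m$ and the telescoping $\rho^{m-k(m)} \le \rho^\tau$), or as a self-consistent choice of $\rho$ solving the fixed-point inequality globally. Care is also needed because $k(m)$ could in principle refer to an update from the current inner loop whose value has not been ``committed'' yet; I would lean on the consistent-reading assumption and the bounded-delay condition to ensure $\u_{k(m)}$ is a well-defined earlier iterate so that $\v_m = \p_{k(m), i_m}$ literally holds.
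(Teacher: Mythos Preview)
Your proposal is correct and follows essentially the same route as the paper: expand the change $\|\p_{m,i}\|^2-\|\p_{m+1,i}\|^2$, bound the cross term via Cauchy--Schwarz, $L$-smoothness, and Young's inequality (your $\beta$ is the paper's $1/r$), average over $i$ to relate $\EB q_m$, $\EB q_{m+1}$, and $\EB q_{k(m)}$, then close with the same bootstrap/fixed-point condition $\rho\bigl(1-\beta-C\eta^2\rho^{\tau}\bigr)\ge 1$ for $\eta$ small. The paper organizes the induction by first checking the base case $m=0$ (where $k(0)=0$) and then imposing the self-consistent inequality on $\rho$, exactly as you outline.
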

\begin{proof}
\begin{align}\label{var:p}
\left\| \p_{m,i} \right\|^2 - \left\| \p_{m+1,i} \right\|^2 &=(\p_{m,i} - \p_{m+1,i})^T(\p_{m,i} + \p_{m+1,i}) \nonumber \\
&=(\p_{m,i} - \p_{m+1,i})^T(2\p_{m,i} + \p_{m+1,i} - \p_{m,i}) \nonumber \\
&\leq 2\p_{m,i}^T(\p_{m,i} - \p_{m+1,i}) \nonumber \\
&\leq 2\left\| \p_{m,i} \right\| \left\| \p_{m,i} - \p_{m+1,i} \right\| \nonumber \\
&= 2\left\| \p_{m,i} \right\| \left\| \nabla f_i(\u_m) - \nabla f_i(\u_{m+1}) \right\| \nonumber \\
&\leq \frac{1}{r}\left\| \p_{m,i} \right\|^2 + r\left\| \nabla f_i(\u_m) - \nabla f_i(\u_{m+1}) \right\|^2 \nonumber \\
&\leq \frac{1}{r}\left\| \p_{m,i} \right\|^2 + rL^2\left\| \u_m - \u_{m+1} \right\|^2 \nonumber \\
&= \frac{1}{r}\left\| \p_{m,i} \right\|^2 + r\eta^2 L^2\left\| \v_m \right\|^2.
\end{align}
The fourth inequality uses Assumption \ref{L-smooth} and $r > 0$ is a constant.
Summing (\ref{var:p}) from $i=1$ to $n$, and taking expectation about $i_m$, we have
\begin{align}\label{eq:p_k}
\EB(q_m - q_{m+1}) \leq \frac{1}{r}\EB q_m + r\eta^2 L^2\EB q_{k(m)}.
\end{align}
We use $c = 2\max \left\{\frac{1}{r}, r\eta^2 L^2\right\}$ and choose $r,\eta$ such that $0<c<1$, then we can get $q_0 \leq \frac{1}{1-c}\EB q_1$. Please note that $k(0) = 0$. Then, we obtain that
\begin{align}
\EB q_m \leq \rho \EB q_{m+1}
\end{align}
where $\rho$ satisfies $\frac{1}{1-c}< \rho$ and $\rho(1-\frac{1}{2}c(1+\rho^\tau)) \geq 1$.
\end{proof}
According to Lemma \ref{lemma1}, $\rho > 1$. If we want $\rho$ to be small enough, we need a small step size $\eta$. This is reasonable because $\u$ should be changed slowly if the gradient applied to update $\u$ is relatively old.

\begin{theorem}\label{theorem1}
With the Assumption \ref{L-smooth} and \ref{strong}, choosing a small step size $\eta$ and large $\tilde{M}$, we have the following result:
\begin{align}
\EB(f(\w_{t+1}) - f(\w_*)) \leq \alpha \EB(f(\w_t) - f(\w_*)) \nonumber
\end{align}
where $\alpha = (\frac{1}{\mu \tilde{M} \eta (1 - 2 (\tau + 1) \rho^{2\tau} \eta L)} +  \frac{2(\tau + 1) \rho^{2\tau} \eta L}{1 - 2 (\tau + 1) \rho^{2\tau} \eta L} )<1$ and $1 - 2 (\tau + 1) \rho^{2\tau} \eta L>0$.
\end{theorem}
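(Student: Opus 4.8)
The plan is to run the classical SVRG potential argument along the inner loop, with iterates $\u_0=\w_t,\u_1,\dots,\u_{\tilde{M}}$, tracking $\|\u_m-\w_*\|^2$, and to absorb the error caused by using the stale update $\v_m=\p_{k(m),i_m}$ (rather than $\p_{m,i_m}$) by means of Lemma~\ref{lemma1}. First I would expand one inner update: from $\u_{m+1}=\u_m-\eta\v_m$,
\[
\|\u_{m+1}-\w_*\|^2=\|\u_m-\w_*\|^2-2\eta\,\v_m^{T}(\u_m-\w_*)+\eta^2\|\v_m\|^2 .
\]
Taking the expectation over $i_m$ conditioned on the past (the fresh draw $i_m$ being independent of everything determining $\u_{k(m)}$), and using $\EB_{i_m}\v_m=\nabla f(\u_{k(m)})$ and $\EB_{i_m}\|\v_m\|^2=q_{k(m)}$, which are immediate from the definitions~(\ref{def:pmi})--(\ref{def:qm}), I obtain
\[
\EB_{i_m}\|\u_{m+1}-\w_*\|^2=\|\u_m-\w_*\|^2-2\eta\,\nabla f(\u_{k(m)})^{T}(\u_m-\w_*)+\eta^2 q_{k(m)} .
\]

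Next I would control the inner product. Splitting it as $\nabla f(\u_{k(m)})^{T}(\u_{k(m)}-\w_*)+\nabla f(\u_{k(m)})^{T}(\u_m-\u_{k(m)})$, I bound the first term below by $f(\u_{k(m)})-f(\w_*)$ using convexity of $f$, and the second below by $f(\u_m)-f(\u_{k(m)})-\tfrac{L}{2}\|\u_m-\u_{k(m)}\|^2$ using that $f$ (an average of $L$-smooth $f_i$) is $L$-smooth; their sum is then $\ge f(\u_m)-f(\w_*)-\tfrac{L}{2}\|\u_m-\u_{k(m)}\|^2$. I would bound the drift by $\|\u_m-\u_{k(m)}\|^2=\eta^2\|\sum_{j=k(m)}^{m-1}\v_j\|^2\le\eta^2\tau\sum_{j=k(m)}^{m-1}\|\v_j\|^2$ (Cauchy--Schwarz, at most $\tau$ summands). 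And I would bound the variance in the usual SVRG way: decomposing $\p_{m,i}$ relative to $\w_*$, using $\|\a+\b\|^2\le2\|\a\|^2+2\|\b\|^2$, $\nabla f(\w_*)=\0$, and the smoothness+convexity bound $\tfrac1n\sum_i\|\nabla f_i(\x)-\nabla f_i(\w_*)\|^2\le 2L(f(\x)-f(\w_*))$, to get $q_m\le 4L(f(\u_m)-f(\w_*))+4L(f(\u_0)-f(\w_*))$.

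With these pieces, I would invoke Lemma~\ref{lemma1} to convert staleness into powers of $\rho$: $\EB q_{k(m)}\le\rho^{\tau}\EB q_m$, and for $j$ in the drift window $m-k(j)\le(m-k(m))+(j-k(j))\le2\tau$, so $\EB q_{k(j)}\le\rho^{2\tau}\EB q_m$ --- this is where $\rho^{2\tau}$ enters. I then sum the one-step identity over the inner loop, telescope $\|\u_m-\w_*\|^2$, drop the nonnegative terminal term, substitute the drift and variance bounds, and swap the double sum from the drift (each index recurs at most $\tau$ times). Writing $D_m=\EB(f(\u_m)-f(\w_*))$, this should collapse to an inequality of the form
\[
2\eta\sum_m D_m\ \le\ \EB\|\u_0-\w_*\|^2+2\eta\,C\Big(\sum_m D_m+\tilde{M}\,D_0\Big),\qquad C=2(\tau+1)\rho^{2\tau}\eta L ,
\]
the exact constant $C$ being what the delay bookkeeping produces (I am schematic about it here). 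Assuming $1-C>0$, rearranging gives $\sum_m D_m\le\frac{\EB\|\u_0-\w_*\|^2}{2\eta(1-C)}+\frac{C}{1-C}\tilde{M}D_0$; then $\u_0=\w_t$ gives $D_0=\EB(f(\w_t)-f(\w_*))$, $\mu$-strong convexity gives $\EB\|\w_t-\w_*\|^2\le\frac2\mu D_0$, and convexity of $f$ together with Option~2 gives $\EB(f(\w_{t+1})-f(\w_*))\le\frac1{\tilde{M}}\sum_m D_m$, which after dividing by $\tilde{M}$ is exactly the claimed contraction with the stated $\alpha$. Finally $\eta$ is taken small enough that $1-2(\tau+1)\rho^{2\tau}\eta L>0$ and Lemma~\ref{lemma1} holds (i.e.\ $c<1$), and $\tilde{M}$ large, so that $\alpha<1$.

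The hard part will be the delay bookkeeping in the two highlighted spots: propagating $k(m)\le m\le k(m)+\tau$ consistently through both the $\eta^2 q_{k(m)}$ term and the nested drift term --- in particular justifying the composed delay (hence $\rho^{2\tau}$ rather than $\rho^{\tau}$) and pinning down the combinatorial $(\tau+1)$-type factor after the double-sum swap --- all while keeping the step size in the regime where Lemma~\ref{lemma1} is valid, since $\rho$ itself grows as $\eta$ grows and the final constant must satisfy $2(\tau+1)\rho^{2\tau}\eta L<1$. The remaining ingredients (the one-step expansion, the convexity/smoothness manipulations, the SVRG variance bound, and the conversion to function values via strong convexity and averaging) are routine.
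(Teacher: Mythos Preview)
Your proposal is correct and follows essentially the same route as the paper's proof: expand one inner step, split the cross term via convexity and $L$-smoothness, control the drift $\|\u_m-\u_{k(m)}\|$, apply Lemma~\ref{lemma1} to trade staleness for powers of $\rho$, invoke the SVRG variance bound $q_m\le 4L\big(f(\u_m)-f(\w_*)+f(\u_0)-f(\w_*)\big)$, then telescope, use $\mu$-strong convexity on $\|\w_t-\w_*\|^2$, and average (Option~2).

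The only difference is in the bookkeeping you flagged as the hard part. The paper avoids the double-sum swap: after arriving at $\eta^2\EB\|\v_m\|^2+2\eta^3 L\sum_{l=k(m)}^{m-1}\EB\|\v_l\|^2$, it uses $2L\eta\le 1$ to merge both pieces into the single sum $\eta^2\sum_{l=k(m)}^{m}\EB q_{k(l)}$, then applies Lemma~\ref{lemma1} twice per summand (once $k(l)\to l$, once $l\to m$, each at most $\tau$ steps) to obtain $(\tau+1)\rho^{2\tau}\eta^2\,\EB q_m$ directly before summing over $m$; this produces the exact constant $2(\tau+1)\rho^{2\tau}\eta L$ without any further combinatorics.
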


\subsection{Inconsistent Reading}
The consistent reading scheme would cost much waiting time because we need a lock whenever a thread tries to read $\u$ or update $\u$. In this subsection, we will introduce an inconsistent reading scheme, in which a thread does not need a lock when reading current $\u$ in the memory. For the update step, the thread still need a lock. Please note that our inconsistent reading scheme is different from that in~\cite{hsieh2015passcode} which adopts the atomic update strategy. Since the update vector applied to $\u$ is usually dense, the atomic update strategy used in~\cite{hsieh2015passcode} is not applicable for our case.

For convenience, we use $\U = \left\{\u_0, \u_1,\ldots\right\}$ to denote the vector set generated in the inner loop of our algorithm, and $\hat{\u}_m$ to denote the vector that one thread gets from the shared memory and uses
to update $\u_m$. Then, we have
\begin{align}\label{update:inconsistent}
&\hat{\v}_m = \nabla f_{i_m}(\hat{\u}_m) - \nabla f_{i_m}(\u_0) + \nabla f(\u_0) \nonumber \\
&\u_{m+1} = \u_{m} - \eta \hat{\v}_m
\end{align}

We also need the following assumption:
\begin{assumption}\label{assum}
For all threads, they enjoy the same speed of reading operation and the same speed of updating operation. And any reading operation is faster than updating operation, which means that for three scalars $a,b,c$, $``b = a"$ is
faster than $``a = a + c"$.
\end{assumption}

Since we do not use locks when a thread reads $\u$ in the shared memory, some elements in $\u$ which have not been read by one thread may be changed by other threads. Usually, $\hat{\u}_m \notin \U$. If we call the age of each element of
$\u_m \in \U$ to be $m$, the ages of elements of $\hat{\u}_m$ may not be  the same. We use $a(m)$ to denote the smallest age of the elements of $\hat{\u}_m$. Of course, we expect that $a(m)$ is not too small. Given a positive integer
$\tau$, we assume that $m - a(m)\leq \tau$. With Assumption \ref{assum}, according to the definition of $\hat{\u}_m$ and $a(m)$, we have
\begin{align}\label{def:hatw}
\hat{\u}_m = \P_{\g_{m,1}}\u_{a(m)} + \P_{\g_{m,2}} \u_{a(m)+1}, %+ \ldots + \P_{\g_{m,m-a(m)+1}} \w_m
\end{align}
where $\g_{m,i}$ is a set that belongs to $\left\{1, 2, \ldots,p\right\} (i=1,2)$, $\P_{\g_{m,i}} \in \RB^{p\times p}$ is a diagonal matrix that only the $k^{th}$ diagonal position is $1$, $\forall k \in \g_{m,i}$, and other
elements of $\P_{\g_{m,i}}$ are $0$.

The (\ref{def:hatw}) is right because with an update lock and Assumption \ref{assum}, at most one thread is updating $\u$ at any time. If a thread begins to read $\u$, only two cases would happen. One is that no threads are
updating $\u$, which leads $\P_{\g_{m,2}} = \0$. Another is that one thread is updating $\u$, which leads to the result that the thread would get a new $\u$ and may also get some old elements. Obviously, they enjoy the same age of $\u$ if it
reads at a good pace.

Then, we can get the following results:
\begin{itemize}
\item $a(m) \nleqslant a(m+1)$ or $a(m+1) \nleqslant a(m)$.
\item $\g_{m,1}\cap\g_{m,2} = \emptyset$, and $\g_{m,1}\bigcup\g_{m,2} = \left\{1,2,\ldots,p\right\}$, which means that $\P_{\g_{m,1}} + \P_{\g_{m,2}} = \I_p$, $I_p$ is an identity matrix.
\end{itemize}

Similar to (\ref{def:pmi}) and (\ref{def:qm}), we give the following definitions:
\begin{align}
\hat{\p}_i(\x) &= \nabla f_i(\x) - \nabla f_i(\u_0) + \nabla f(\u_0), \nonumber \\
\hat{q}(\x) &= \frac{1}{n} \sum_{i=1}^n \left\| \hat{\p}_i(\x) \right\|^2 = \EB_i \left\| \hat{\p}_i(\x) \right\|^2, \nonumber \\
\hat{q}_m &= \frac{1}{n} \sum_{i=1}^n \left\| \hat{\p}_i(\hat{\u}_m) \right\|^2. \nonumber
\end{align}
We can find that $\hat{\v}_m = \hat{\p}_i(\hat{\u}_m)$ and $\EB\left\| \hat{\v}_m \right\|^2 = \EB(\EB_{i_m}\left\| \hat{\v}_m \right\|^2) = \EB\hat{q}_m$.

We give a notation for any two integers $m,n$, i.e., $\sum_{i=n}^m = \sum_{i=m}^n = \sum_{i = \min \left\{m,n\right\}}^{\max \left\{m,n\right\}}$.

According to the proof in Lemma \ref{lemma1}, we can get the property that $\forall \x, \y, r>0$,
\begin{align}\label{based ineq}
\left\| \hat{\p}_i(\x) \right\|^2 - \left\| \hat{\p}_i(\y) \right\|^2 \leq \frac{1}{r} \left\| \hat{\p}_i(\x) \right\|^2 + rL^2\left\| \x - \y \right\|^2.
\end{align}

\begin{lemma}\label{estimate:hatq}
There exists a constant $\rho>1$ and a corresponding suitable step size $\eta$ that make:
\begin{align}
\EB\hat{q}_m < \rho \EB\hat{q}_{m+1}. \nonumber
\end{align}
\end{lemma}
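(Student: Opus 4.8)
The plan is to mimic the proof of Lemma~\ref{lemma1}, but now carefully tracking the inconsistency in the reads. Recall from~(\ref{based ineq}) that for any $\x,\y$ and any $r>0$,
\[
\left\| \hat{\p}_i(\x) \right\|^2 - \left\| \hat{\p}_i(\y) \right\|^2 \leq \frac{1}{r} \left\| \hat{\p}_i(\x) \right\|^2 + rL^2\left\| \x - \y \right\|^2 .
\]
First I would apply this with $\x = \hat{\u}_m$ and $\y = \hat{\u}_{m+1}$ to bound $\hat{q}_m - \hat{q}_{m+1}$ after averaging over $i$ and taking expectation, obtaining
\[
\EB(\hat{q}_m - \hat{q}_{m+1}) \leq \frac{1}{r}\EB\hat{q}_m + rL^2\,\EB\left\| \hat{\u}_m - \hat{\u}_{m+1} \right\|^2 .
\]
So the core task is to control $\EB\left\| \hat{\u}_m - \hat{\u}_{m+1} \right\|^2$ in terms of the quantities $\EB\hat{q}_j$ for $j$ near $m$.

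Next I would expand $\hat{\u}_m - \hat{\u}_{m+1}$ using the decomposition~(\ref{def:hatw}): $\hat{\u}_m = \P_{\g_{m,1}}\u_{a(m)} + \P_{\g_{m,2}}\u_{a(m)+1}$ and similarly for $\hat{\u}_{m+1}$. Since each $\P_{\g}$ is a coordinate projection and the consecutive $\u$'s differ by single update steps $\u_{j+1} - \u_j = -\eta\hat{\v}_j$, the difference $\hat{\u}_m - \hat{\u}_{m+1}$ can be written as a sum of at most $O(\tau)$ terms, each of the form (projection of) $\eta\hat{\v}_j$ with $a(m)\wedge a(m+1) \le j \le (a(m)\vee a(m+1)) + 1$, and by the bounded-delay hypothesis $m - a(m)\le\tau$ all these indices lie in a window of width $O(\tau)$ around $m$. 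Using $\|\P_\g\z\|\le\|\z\|$, the triangle inequality, and the Cauchy--Schwarz inequality $\|\sum_{j}\a_j\|^2 \le (\text{number of terms})\sum_j\|\a_j\|^2$, I get
\[
\EB\left\| \hat{\u}_m - \hat{\u}_{m+1} \right\|^2 \le C\tau\,\eta^2 \sum_{j} \EB\left\| \hat{\v}_j \right\|^2 = C\tau\,\eta^2 \sum_{j} \EB\hat{q}_j ,
\]
where the sum runs over the $O(\tau)$ indices $j$ in the window and $C$ is an absolute constant; here I use the identity $\EB\|\hat{\v}_j\|^2 = \EB\hat{q}_j$ noted in the text.

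Combining the two displays yields a recursion of the form $\EB(\hat{q}_m - \hat{q}_{m+1}) \le \frac{1}{r}\EB\hat{q}_m + C'\tau\eta^2 L^2 \sum_{|j-m|\le O(\tau)} \EB\hat{q}_j$. Now I would close the argument exactly as in Lemma~\ref{lemma1}: set $c$ to be a suitable constant multiple of $\max\{1/r,\ \tau\eta^2 L^2\}$, choose $r$ large and $\eta$ small so that $0<c<1$, and then argue inductively that a ratio bound $\EB\hat{q}_j \le \rho\,\EB\hat{q}_{j+1}$ propagates: plugging the ansatz $\EB\hat{q}_j \le \rho^{m-j}\EB\hat{q}_m$ (valid for $j$ in the window, up to the delay $\tau$) into the recursion replaces the sum by a geometric-type factor $\sim \rho^{O(\tau)}$, and one picks $\rho>1$ close enough to $1$ (equivalently $\eta$ small enough) that the resulting inequality $\rho\big(1 - c\,\Theta(\rho^{O(\tau)})\big)\ge 1$ holds, giving $\EB\hat{q}_m < \rho\,\EB\hat{q}_{m+1}$.

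The main obstacle is the bookkeeping in the second step: unlike the consistent case where $\u_m - \u_{m+1}$ is a single update $\eta\v_m$, here $\hat{\u}_m$ and $\hat{\u}_{m+1}$ are \emph{inconsistent} snapshots whose coordinates have possibly different ages straddling $a(m)$ and $a(m)+1$, so their difference telescopes through an uncontrolled-but-bounded ($\le O(\tau)$) number of intermediate updates, and one must be careful that the projections $\P_{\g_{m,i}}$, $\P_{\g_{m+1,i}}$ partition coordinates consistently (using $\P_{\g_{m,1}}+\P_{\g_{m,2}}=\I_p$) so that no coordinate is double-counted and every term is genuinely one of the $\eta\hat{\v}_j$. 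Once that combinatorial estimate is in hand, the rest is the same fixed-point-in-$\rho$ argument as before.
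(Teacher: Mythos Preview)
Your proposal is correct and follows essentially the same approach as the paper: apply~(\ref{based ineq}) with $\x=\hat\u_m$, $\y=\hat\u_{m+1}$, rewrite $\hat\u_m=\u_{a(m)}-\eta\P_{\g_{m,2}}\hat\v_{a(m)}$ via $\P_{\g_{m,1}}+\P_{\g_{m,2}}=\I_p$, telescope $\u_{a(m)}-\u_{a(m+1)}$ into the updates $\eta\hat\v_l$, take expectations to convert $\EB\|\hat\v_l\|^2$ to $\EB\hat q_l$, and then close with the same $\rho$-recursion argument as in Lemma~\ref{lemma1}. The only cosmetic difference is that you carry an explicit $O(\tau)$ Cauchy--Schwarz factor in the bound for $\|\hat\u_m-\hat\u_{m+1}\|^2$, whereas the paper writes the bound as $4\eta^2\sum_{l=a(m)}^{a(m+1)}\|\hat\v_l\|^2$ and absorbs the length of the window into the $\rho^\tau$ factor at the end; either bookkeeping leads to the same conclusion after choosing $\eta$ small enough.
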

\begin{proof} According to (\ref{based ineq}), we have
\begin{align}\label{var:hatp}
\left\| \hat{\p}_i(\hat{\u}_m) \right\|^2 - \left\| \hat{\p}_i(\hat{\u}_{m+1}) \right\|^2 \leq \frac{1}{r}\left\| \hat{\p}_i(\hat{\u}_m) \right\|^2 + rL^2\left\| \hat{\u}_m - \hat{\u}_{m+1} \right\|^2.
\end{align}
According to (\ref{def:hatw}), we have
\begin{align}
     & \left\| \hat{\u}_m - \hat{\u}_{m+1} \right\|^2 \nonumber \\
    =& \left\| \u_{a(m)} - \u_{a(m+1)} + \eta \P_{\g_{m,2}} \hat{\v}_{a(m)} - \eta \P_{\g_{m+1,2}} \hat{\v}_{a(m+1)} \right\|^2 \nonumber \\
\leq & 2\sum_{l=a(m)}^{a(m+1)-1} \left\| \u_l - \u_{l+1} \right\|^2 + 2\eta^2 \left\| \hat{\v}_{a(m)} \right\|^2 + 2\eta^2 \left\| \hat{\v}_{a(m+1)} \right\|^2 \nonumber \\
\leq & 4\eta^2\sum_{l=a(m)}^{a(m+1)} \left\| \hat{\v}_l \right\|^2. \nonumber
\end{align}
In the first inequality, $a(m+1)$ may be less than $a(m)$, but it won't impact the result of the second inequality.
Summing from $i=1$ to $n$ for (\ref{var:hatp}), we can get
\begin{align}
\hat{q}_m - \hat{q}_{m+1} \leq \frac{1}{r}\hat{q}_m + 4r\eta^2 L^2 \sum_{l=a(m)}^{a(m+1)} \left\| \hat{\v}_l \right\|^2 .\nonumber
\end{align}
Taking expectation to $i_{a(m)},i_{a(m)+1},\ldots,i_{a(m+1)}$ which are the random numbers selected by Algorithm \ref{alg:AsySVRG}, we obtain
\begin{align}
\EB\hat{q}_m - \EB\hat{q}_{m+1} \leq \frac{1}{r}\EB\hat{q}_m + 4r\eta^2 L^2 \sum_{l=a(m)}^{a(m+1)} \EB\hat{q}_l . \nonumber
\end{align}
When $\rho, r, \eta$ satisfy the following condition:
\begin{align}
&\frac{1 + 4r\eta^2L}{1-\frac{1}{r} - 4r\eta^2L^2} \leq \rho \nonumber \\
&\rho (1-\frac{1}{r}-4r\eta^2L^2(\tau+1)\rho^\tau)>1+4r\eta^2L^2 ,\nonumber
\end{align}
we have
\begin{align}
\EB\hat{q}_m < \rho \EB\hat{q}_{m+1} \nonumber
\end{align}
\end{proof}

\begin{lemma}\label{estimate:hatvm}
For the relation between $\hat{q}_m$ and $\hat{q}(\u_m)$, we have the following result
\begin{align}
\EB\hat{q}_m \leq c_1 \EB\hat{q}(\u_m). \nonumber
\end{align}
where $c_1 = \frac{1}{1-\frac{1}{r} - 4r\tau\rho^\tau\eta^2L^2}>1$.
\end{lemma}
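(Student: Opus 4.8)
The plan is to pass from $\hat{\u}_m$ to $\u_m$ inside $\hat{q}$ by one application of the elementary bound~(\ref{based ineq}), and then to control the discrepancy $\|\hat{\u}_m - \u_m\|^2$ using the inconsistent-read decomposition~(\ref{def:hatw}), the update rule~(\ref{update:inconsistent}), the delay bound $m - a(m) \le \tau$, and Lemma~\ref{estimate:hatq}.

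\textbf{Step 1.} Apply~(\ref{based ineq}) with $\x = \hat{\u}_m$ and $\y = \u_m$, sum over $i = 1,\ldots,n$, and divide by $n$:
\begin{align}
\hat{q}_m - \hat{q}(\u_m) \le \frac{1}{r}\hat{q}_m + rL^2 \left\| \hat{\u}_m - \u_m \right\|^2 . \nonumber
\end{align}
Rearranged, $(1 - \frac{1}{r})\hat{q}_m \le \hat{q}(\u_m) + rL^2\|\hat{\u}_m - \u_m\|^2$, so the whole claim reduces to bounding $\EB\|\hat{\u}_m - \u_m\|^2$ by a small multiple of $\EB\hat{q}_m$.

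\textbf{Step 2.} Using~(\ref{def:hatw}), the relation $\u_{a(m)+1} = \u_{a(m)} - \eta\hat{\v}_{a(m)}$, and $\P_{\g_{m,1}} + \P_{\g_{m,2}} = \I_p$, one has $\hat{\u}_m = \u_{a(m)} - \eta\P_{\g_{m,2}}\hat{\v}_{a(m)}$, while unrolling~(\ref{update:inconsistent}) gives $\u_m = \u_{a(m)} - \eta\sum_{l=a(m)}^{m-1}\hat{\v}_l$. Subtracting,
\begin{align}
\hat{\u}_m - \u_m = \eta\Big( \P_{\g_{m,1}}\hat{\v}_{a(m)} + \sum_{l=a(m)+1}^{m-1}\hat{\v}_l \Big) , \nonumber
\end{align}
a sum of at most $m - a(m) \le \tau$ vectors. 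By the power-mean inequality and $\|\P_{\g_{m,1}}\hat{\v}_{a(m)}\| \le \|\hat{\v}_{a(m)}\|$, this gives $\|\hat{\u}_m - \u_m\|^2 \le C\eta^2\tau\sum_{l=a(m)}^{m-1}\|\hat{\v}_l\|^2$ with $C$ an absolute constant.

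\textbf{Step 3.} Take expectation over $i_{a(m)},\ldots,i_{m-1}$; using $\EB\|\hat{\v}_l\|^2 = \EB\hat{q}_l$ and the iterated inequality of Lemma~\ref{estimate:hatq}, $\EB\hat{q}_l \le \rho^{m-l}\EB\hat{q}_m \le \rho^{\tau}\EB\hat{q}_m$, we obtain $\EB\|\hat{\u}_m - \u_m\|^2 \le 4\eta^2\tau\rho^{\tau}\EB\hat{q}_m$ once the constants are tracked exactly as in the proof of Lemma~\ref{estimate:hatq}. Plugging into Step 1, $(1 - \frac{1}{r} - 4r\tau\rho^{\tau}\eta^2L^2)\EB\hat{q}_m \le \EB\hat{q}(\u_m)$; choosing $\eta$ small enough that the coefficient is positive (the same regime that Lemma~\ref{estimate:hatq} already needs), dividing gives $\EB\hat{q}_m \le c_1\EB\hat{q}(\u_m)$ with $c_1 = (1 - \frac{1}{r} - 4r\tau\rho^{\tau}\eta^2L^2)^{-1}$, and $c_1 > 1$ since its denominator is strictly below $1$.

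I expect the main obstacle to be Step 2: identifying precisely which updates separate $\hat{\u}_m$ from $\u_m$ when the ages $a(\cdot)$ need not be monotone, and bounding $\|\hat{\u}_m - \u_m\|^2$ by a sum of $\|\hat{\v}_l\|^2$ over only $O(\tau)$ indices confined to a window of length $\tau$ around $m$ — this counting is what ties the coefficient in $c_1$ to $\tau$ and $\rho$, while the algebra in Steps 1 and 3 is routine.
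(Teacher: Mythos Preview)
Your proposal is correct and follows essentially the same route as the paper: apply~(\ref{based ineq}) with $\x=\hat{\u}_m$, $\y=\u_m$, bound $\|\hat{\u}_m-\u_m\|^2$ by $O(\eta^2)\sum_{l=a(m)}^{m-1}\|\hat{\v}_l\|^2$ via the decomposition~(\ref{def:hatw}) and the update rule, then take expectation and invoke Lemma~\ref{estimate:hatq} to collapse the sum into $\tau\rho^\tau\EB\hat{q}_m$. Your anticipated obstacle about non-monotone $a(\cdot)$ does not actually arise here (it matters only in Lemma~\ref{estimate:hatq}, where $\hat{\u}_m$ is compared to $\hat{\u}_{m+1}$); for this lemma the relevant indices are simply $a(m),\ldots,m-1$, all below $m$, so Step~2 is straightforward.
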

\begin{proof}
In (\ref{based ineq}), if we take $\x = \hat{\u}_m$ and $\y = \u_m$, we can obtain
\begin{align}
\left\| \hat{\p}_i(\hat{\u}_m) \right\|^2 - \left\| \hat{\p}_i(\u_m) \right\|^2 \leq \frac{1}{r} \left\| \hat{\p}_i(\hat{\u}_m) \right\|^2 + rL^2\left\| \hat{\u}_m - \u_m \right\|^2 .\nonumber
\end{align}
Summing from $i=1$ to $n$, we obtain
\begin{align}\label{EBhatvm}
\hat{q}_m - \hat{q}(\u_m) \leq & \frac{1}{r} \hat{q}_m + rL^2 \left\| \hat{\u}_m - \u_m \right\|^2 \\
                             = & \frac{1}{r}\hat{q}_m + rL^2 \left\| \P_{\g_{m,1}}\u_{a(m)} + \P_{\g_{m,2}} \u_{a(m)+1} - \u_m \right\|^2 \nonumber \\
                          \leq & \frac{1}{r} \hat{q}_m + 4rL^2 \sum_{l=a(m)}^{m-1} \left\| \u_l - \u_{l+1} \right\|^2 \nonumber \\
                             = & \frac{1}{r} \hat{q}_m + 4r\eta^2L^2 \sum_{l=a(m)}^{m-1} \left\| \hat{\v}_l \right\|^2 ,\nonumber
\end{align}
where the second inequality uses the fact that $\P_{\g_{m,1}} + \P_{\g_{m,2}} = \I_p$.

Taking expectation on both sides, and using Lemma \ref{estimate:hatq}, we get
\begin{align}
\EB\hat{q}_m - \EB \hat{q}(\u_m)\leq & (\frac{1}{r} + 4r\tau\rho^\tau\eta^2L^2) \EB \left\| \hat{\v}_m \right\|^2 ,\nonumber
\end{align}
which means that
\begin{align}
\EB\hat{q}_m \leq \frac{1}{1 - \frac{1}{r} - 4r\tau\rho^\tau\eta^2L^2}\EB \hat{q}(\u_m). \nonumber
\end{align}
\end{proof}

Similar to Theorem (\ref{theorem1}), we have the following result about inconsistent reading:
\begin{theorem}\label{theorem:inconsistent}
With Assumption \ref{L-smooth},~\ref{strong},~\ref{assum}, a suitable step size $\eta$ which satisfies the condition in Lemma \ref{estimate:hatq},~\ref{estimate:hatvm}, and a large $\tilde{M}$, we can get our convergence
result for the inconsistent reading scheme:
\begin{align}
\EB(f(\w_{t+1}) - f(\w_*)) \leq (\frac{2}{\mu \tilde{M}(2\eta - c_2)} + \frac{c_2}{2\eta - c_2}) \EB(f(\w_t) - f(\w_*)), \nonumber
\end{align}
where $c_2 = \frac{4L\eta^2 + 16\tau\rho^\tau L^2 \eta^3}{1 - \frac{1}{r} - 4r\tau\rho^\tau\eta^2L^2} < 2\eta$.
\end{theorem}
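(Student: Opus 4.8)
\section*{Proof proposal for Theorem~\ref{theorem:inconsistent}}

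The plan is to run the standard SVRG potential argument on the inner iterates $\u_m$, paying for the staleness $\hat{\u}_m \ne \u_m$ through Lemmas~\ref{estimate:hatq} and~\ref{estimate:hatvm}. Starting from $\u_{m+1} = \u_m - \eta\hat{\v}_m$ in~(\ref{update:inconsistent}), I would expand
\[
\|\u_{m+1} - \w_*\|^2 = \|\u_m - \w_*\|^2 - 2\eta\,\hat{\v}_m^T(\u_m - \w_*) + \eta^2\|\hat{\v}_m\|^2 ,
\]
and take $\EB_{i_m}$ conditioned on the past. Since $i_m$ is drawn independently of the value $\hat{\u}_m$ that the thread has read, $\EB_{i_m}\hat{\v}_m = \nabla f(\hat{\u}_m)$, so the middle term becomes $-2\eta\,\nabla f(\hat{\u}_m)^T(\u_m - \w_*)$ and, after the outer expectation, the last term becomes $\eta^2\EB\hat{q}_m$.

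The heart of the argument is the one-step inequality $-2\eta\,\nabla f(\hat{\u}_m)^T(\u_m - \w_*) \le -2\eta\big(f(\u_m) - f(\w_*)\big) + \eta L\,\|\hat{\u}_m - \u_m\|^2$. To get it I would write $\u_m - \w_* = (\hat{\u}_m - \w_*) + (\u_m - \hat{\u}_m)$, bound $\nabla f(\hat{\u}_m)^T(\hat{\u}_m - \w_*) \ge f(\hat{\u}_m) - f(\w_*)$ by convexity of $f$, and bound $f(\hat{\u}_m) - f(\w_*) \ge \big(f(\u_m) - f(\w_*)\big) - \nabla f(\hat{\u}_m)^T(\u_m - \hat{\u}_m) - \tfrac{L}{2}\|\u_m - \hat{\u}_m\|^2$ by the descent inequality for the $L$-smooth $f$ (Assumption~\ref{L-smooth}); the two copies of $\nabla f(\hat{\u}_m)^T(\u_m - \hat{\u}_m)$ then cancel, and crucially no $\|\u_m - \w_*\|^2$ term is produced. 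For the remaining pieces I would (i) use Lemma~\ref{estimate:hatvm} to replace $\EB\hat{q}_m$ by $c_1\EB\hat{q}(\u_m)$ and then apply the classical SVRG variance bound $\hat{q}(\u_m) \le 4L\big(f(\u_m) - f(\w_*)\big) + 4L\big(f(\u_0) - f(\w_*)\big)$ (from smoothness and convexity of each $f_i$ together with $\nabla f(\w_*)=0$), and (ii) reuse the estimate $\|\hat{\u}_m - \u_m\|^2 \le 4\eta^2\sum_{l=a(m)}^{m-1}\|\hat{\v}_l\|^2$ already derived inside the proof of Lemma~\ref{estimate:hatvm} from~(\ref{def:hatw}) and $\P_{\g_{m,1}} + \P_{\g_{m,2}} = \I_p$.

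Assembling these gives the one-step bound $\EB\|\u_{m+1} - \w_*\|^2 \le \EB\|\u_m - \w_*\|^2 - 2\eta\,\EB(f(\u_m) - f(\w_*)) + \eta^2\EB\hat{q}_m + 4\eta^3 L\sum_{l=a(m)}^{m-1}\EB\hat{q}_l$, which I would sum over $m = 0, \dots, \tilde{M}-1$. The $\|\cdot-\w_*\|^2$ terms telescope, and discarding $\EB\|\u_{\tilde{M}} - \w_*\|^2 \ge 0$ they contribute at most $\|\u_0 - \w_*\|^2$. In the delayed double sum each index $l$ occurs at most $\tau$ times (since $m - a(m)\le\tau$) and, by iterating Lemma~\ref{estimate:hatq}, $\EB\hat{q}_l \le \rho^{\tau}\EB\hat{q}_m$ whenever $a(m)\le l\le m-1$; together with Lemma~\ref{estimate:hatvm} and the SVRG variance bound, both $\eta^2\sum_m\EB\hat{q}_m$ and $4\eta^3 L\sum_m\sum_l\EB\hat{q}_l$ collapse into $c_2\big[\sum_m\EB(f(\u_m)-f(\w_*)) + \tilde{M}\,\EB(f(\u_0)-f(\w_*))\big]$ with exactly the stated $c_2$ (the $4L\eta^2$ from the variance term, the $16\tau\rho^{\tau}L^2\eta^3$ from the staleness term, the denominator being $1/c_1$ from Lemma~\ref{estimate:hatvm}). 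Hence $(2\eta - c_2)\sum_m\EB(f(\u_m)-f(\w_*)) \le \|\u_0 - \w_*\|^2 + c_2\tilde{M}\,\EB(f(\u_0)-f(\w_*))$. Using $\mu$-strong convexity (Assumption~\ref{strong}) as $\|\u_0 - \w_*\|^2 \le \tfrac{2}{\mu}(f(\u_0)-f(\w_*))$, applying convexity of $f$ with Option~2 ($\w_{t+1} = \tfrac{1}{\tilde{M}}\sum_m \u_m$) to get $\EB(f(\w_{t+1})-f(\w_*)) \le \tfrac{1}{\tilde{M}}\sum_m\EB(f(\u_m)-f(\w_*))$, dividing by $(2\eta - c_2)\tilde{M}$, and recalling $\u_0 = \w_t$ yields the claimed contraction; the hypotheses $c_2 < 2\eta$ and "large $\tilde{M}$" make the factor less than $1$.

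The main obstacle is the one-step inner-product estimate: the naive split into $\nabla f(\u_m)^T(\u_m - \w_*)$ plus a perturbation forces, via Young's inequality, a $\|\u_m - \w_*\|^2$ term that does not telescope cleanly, so the convexity-plus-descent cancellation above is essential and is what keeps the staleness penalty at order $\eta^3 L^2$. The secondary difficulty is the bookkeeping of the delayed sums — the at-most-$\tau$ multiplicity combined with the $\rho^{\tau}$ price of shifting $\hat{q}$-indices through Lemma~\ref{estimate:hatq} — which is exactly what produces the $\tau\rho^{\tau}$ factor inside $c_2$. One should also be mildly careful about the independence of $i_m$ from $\hat{\u}_m$ underlying $\EB_{i_m}\hat{\v}_m = \nabla f(\hat{\u}_m)$, which here is granted by the algorithmic ordering (the index is chosen independently of what is read from shared memory).
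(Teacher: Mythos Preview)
Your proposal is correct and follows essentially the same route as the paper's proof: the same expansion of $\|\u_{m+1}-\w_*\|^2$, the same convexity-plus-$L$-smoothness cancellation to control $\nabla f(\hat{\u}_m)^T(\u_m-\w_*)$ (the paper simply refers back to its Theorem~\ref{theorem1} argument for this step), the same staleness bound $\|\hat{\u}_m-\u_m\|^2\le 4\eta^2\sum_{l=a(m)}^{m-1}\|\hat{\v}_l\|^2$, and the same chain Lemma~\ref{estimate:hatq} $\to$ Lemma~\ref{estimate:hatvm} $\to$ SVRG variance bound, followed by telescoping, strong convexity on $\|\u_0-\w_*\|^2$, and Jensen for Option~2. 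The only cosmetic difference is that the paper applies the $\tau\rho^{\tau}$ estimate per step (bounding $\sum_{l=a(m)}^{m-1}\EB\hat{q}_l\le\tau\rho^{\tau}\EB\hat{q}_m$ before summing over $m$) rather than after summing as you describe, but the resulting constant $c_2$ is identical.
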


\textbf{Remark}: In our convergence analysis for both consistent reading and inconsistent reading schemes, there are a lot of parameters, such as $r, \eta, \rho, \tau, \tilde{M}, \mu, L$. We can set $r = \frac{1}{\eta}$. Since $\mu, L$ are determined
by the objective function and $\rho, \tau$ are constants, we only need the step size $\eta$ to be small enough and $\tilde{M}$ to be large enough. Then, all the conditions in these lemmas and theorems will be satisfied.

\section{Experiments}
We choose logistic regression with a $L_2$-norm regularization term to evaluate our AsySVRG. Hence, the $f(\w)$ is defined as follows:
\begin{align}
f(\w) = \sum_{i=1}^n \log(1+e^{-y_i \x_i^T \w}) + \frac{\lambda}{2}\left\| \w \right\|^2. \nonumber
\end{align}

We choose  Hogwild! as baseline because Hogwild! has been proved to be the state-of-the-art parallel SGD methods for multicore systems~\cite{recht2011hogwild}. The experiments are conducted on a server with 12 Intel cores and 64G memory.

\subsection{Dataset and Evaluation Metric}
We choose three datasets for evaluation. They are \emph{rcv1, real-sim}, and \emph{news20}, which can be downloaded from the LibSVM website~\footnote{http://www.csie.ntu.edu.tw/~cjlin/libsvmtools/datasets/}. Detailed information is shown in Table~\ref{data}, where $\lambda$ is the hyper-parameter in $f(\w)$.
\begin{table}[htb]
  \caption{Dataset}\label{data}
  \centering
  \begin{tabular}{|c|c|c|c|}
     \hline
     dataset & instances & features & $\lambda$ \\ \hline
     rcv1 & 20,242 & 47,236 & 0.0001 \\ \hline
     real-sim & 72,309 & 20,958 & 0.0001 \\ \hline
     news20 & 19,996 & 1,355,191 & 0.0001 \\
     \hline
   \end{tabular}
\end{table}

We adopt the speedup and convergence rate for evaluation. The definition of speedup is as follows:
$$
speedup = \frac{CPU~time~taken~to~get~a~suboptimal~solution~with~one~thread}{CPU~time~taken~to~get~a~suboptimal~solution~with~p~threads}
$$
We get a suboptimal solution by stopping the algorithms when the gap between training loss and the optimal solution $\min\left\{f(\w)\right\}$ is less than $10^{-4}$.

We set $M$ in Algorithm \ref{alg:AsySVRG} to be $\frac{2n}{p}$, where $n$ is the number of training instances
and $p$ is number of threads. When $p=1$, the setting about $M$ is the same as that in SVRG~\cite{DBLP:conf/nips/Johnson013}. According to our theorems, the step size should be small. However, we can also get good
performance with a relatively large step size in practice. For the Hogwild!, in each epoch, we run each thread $\frac{n}{p}$ iterations. We use a constant step size $\gamma$, and we set $\gamma \leftarrow 0.9\gamma$ after every epoch. These settings are the same as those in
the experiments in Hogwild!\cite{recht2011hogwild}. For each epoch, our algorithm will visit the whole dataset three times and the Hogwild! will visit the whole dataset only once. To make a fair comparison about the convergence rate, we study the change of objective value versus the number of \emph{effective passes}. One effective pass of the dataset means the whole dataset is visited once.

\subsection{Results}
In practice, we find that our AsySVRG algorithm without any lock strategy, denoted by \mbox{AsySVRG-unlock}, can achieve the best performance. Table~\ref{exp1} shows the running time and speedup results of consistent reading, inconsistent reading, and unclock schemes for AsySVRG on dateset rcv1. Here, 77.15s denotes 77.15 seconds, 1.94x means the speedup is 1.94, i.e., it is 1.94 times faster than the sequential~(one-thread) algorithm.
\begin{table}[htb]
  \caption{Lock versus Unlock~(in second)}\label{exp1}
  \centering
  \begin{tabular}{|c|c|c|c|}
     \hline
     threads & consistent reading & inconsistent reading & AsySVRG-unlock \\ \hline
      2 & 77.15s/1.94x & 77.20s/1.94x & 137.55s/1.09x \\ \hline
      4 & 62.20s/2.4x & 51.06s/2.93x & 58.07s/2.58x \\ \hline
      8 & 63.05s/2.4x & 53.93s/2.78x & 30.49s/4.92x \\ \hline
      10 & 64.76s/2.3x & 56.29s/2.66x & 26s/5.77x \\
     \hline
   \end{tabular}
\end{table}

We find that the consistent reading scheme has the worst performance. Hence, in the following experiments, we only report the results of inconsistent reading scheme, denoted by AsySVRG-lock, and AsySVRG-unlock.

Table~\ref{time} compares the time cost between AsySVRG and Hogwild! to achieve a gap less than $10^{-4}$ with 10 threads. We can find that our AsySVRG is much faster than Hogwild!, either with lock or without lock.
\begin{table}[htb]
  \caption{Time~(in second) taken by 10 threads when the gap is less than $10^{-4}$}\label{time}
  \centering
  \begin{tabular}{|c|c|c|c|c|}
     \hline
     % after \\: \hline or \cline{col1-col2} \cline{col3-col4} ...
     ~ & AsySVRG-lock & AsySVRG-unlock & Hogwild!-lock & Hogwild!-unlock \\ \hline
     rcv1 & 55.77 & 25.33 & $>$500 & $>$200 \\ \hline
     real-sim & 42.20 & 21.16 & $>$400 & $>$200 \\ \hline
     news20 & 909.93 & 514.50 & $>$4000 & $>$2000 \\
     \hline
   \end{tabular}
\end{table}

Figure~\ref{fig} shows the speedup and convergence rate on three datasets. Here, AsySVRG-lock-10 denotes AsySVRG with lock strategy on 10 threads. Similar nations are used for other settings of AsySVRG and Hogwild!. We can find that the speedup of AsySVRG and Hogwild! is comparable. Combined with the results in Table~\ref{time}, we can find that Hogwild! is slower than AsySVRG with different numbers of threads. From Figure~\ref{fig}, we can also find that the convergence rate of AsySVRG is much faster than that of Hogwild!.

\begin{figure*}[htb]
\begin{center}
\subfigure[rcv1]{\includegraphics[width=1.5in]{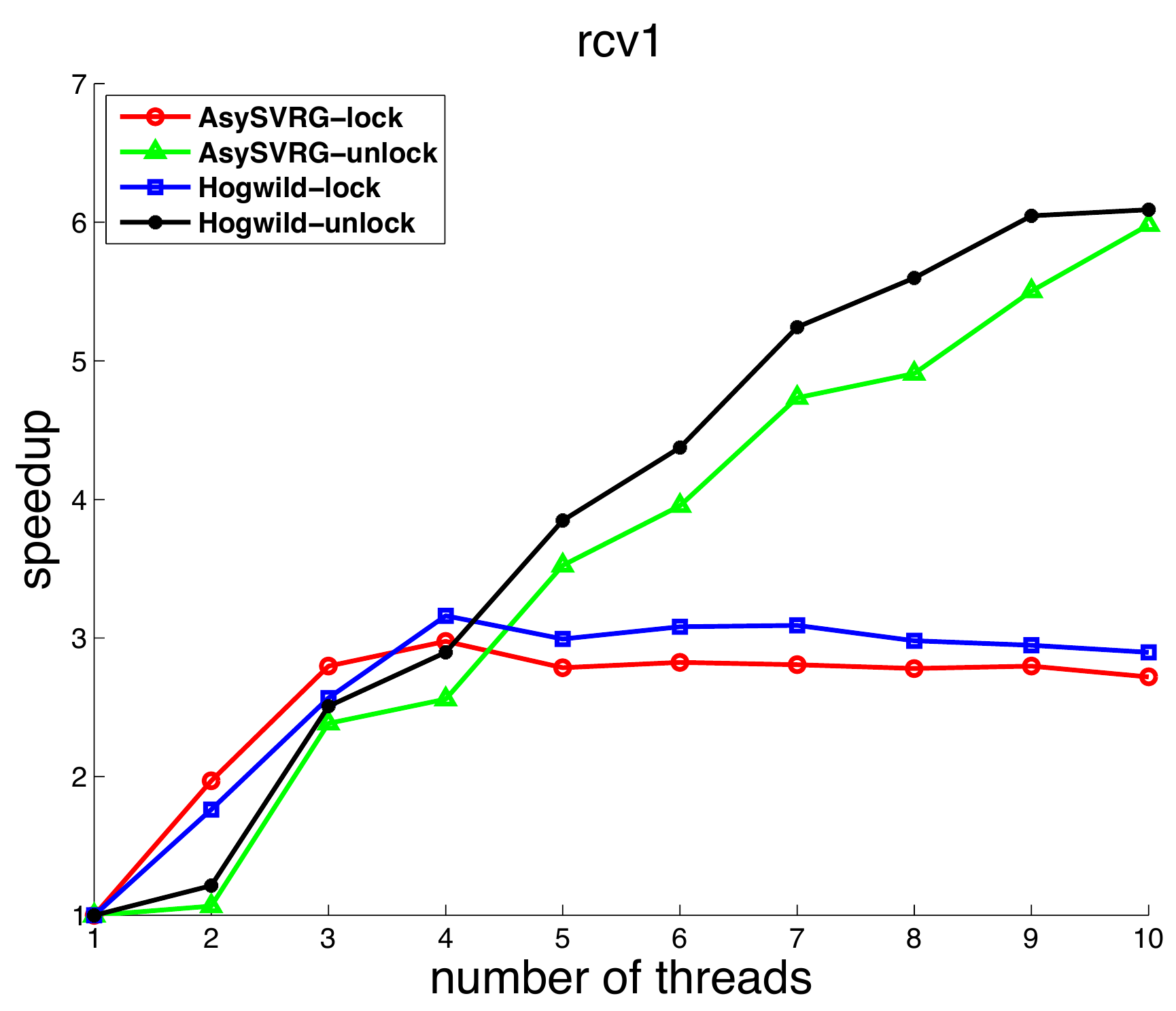}}\hspace{0.5cm}
\subfigure[rcv1]{\includegraphics[width=1.5in]{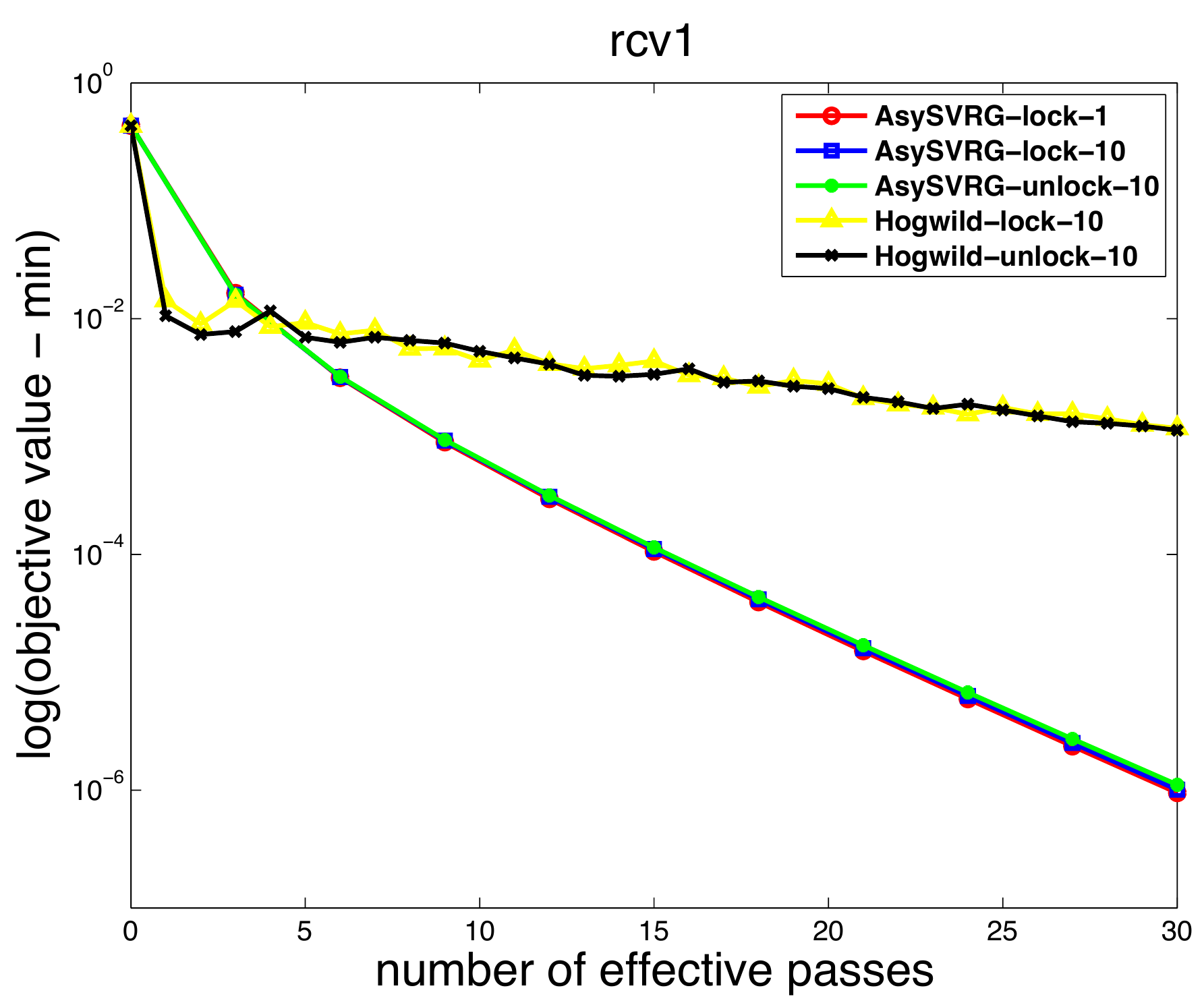}}\hspace{0.5cm}
\subfigure[realsim]{\includegraphics[width=1.5in]{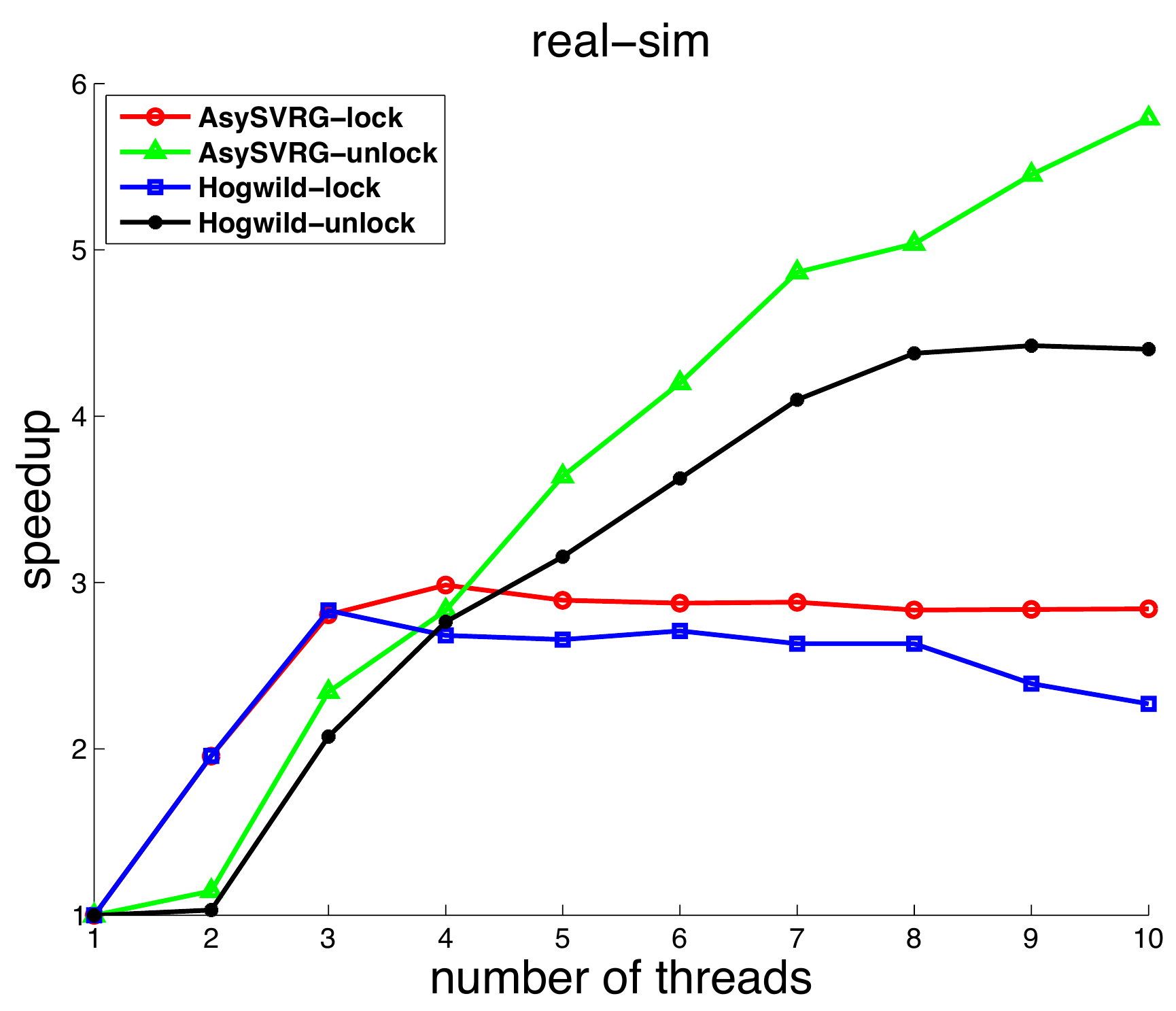}}\hspace{0.5cm}
\subfigure[realsim]{\includegraphics[width=1.5in]{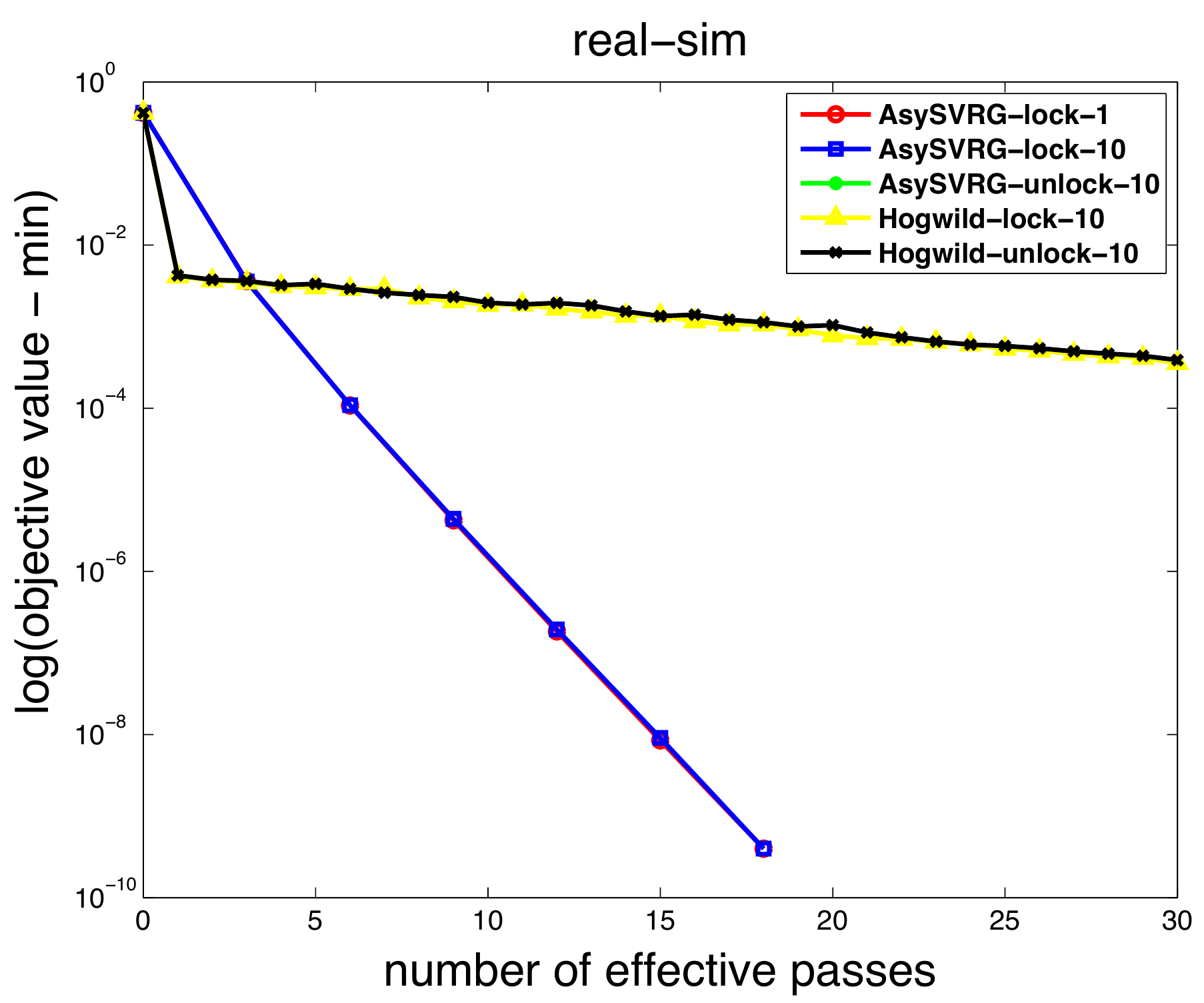}}\hspace{0.5cm}
\subfigure[news20]{\includegraphics[width=1.5in]{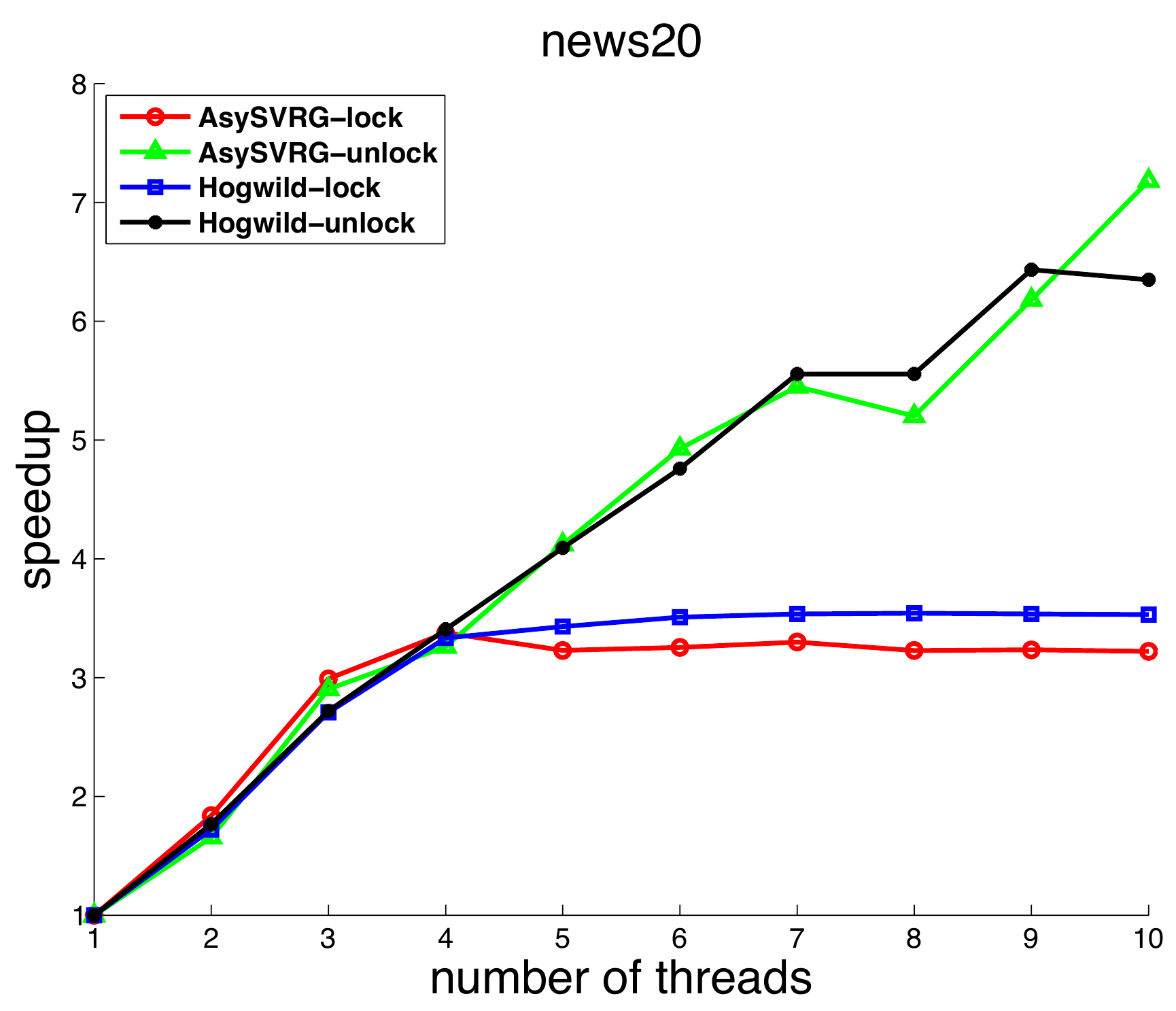}}\hspace{0.5cm}
\subfigure[news20]{\includegraphics[width=1.5in]{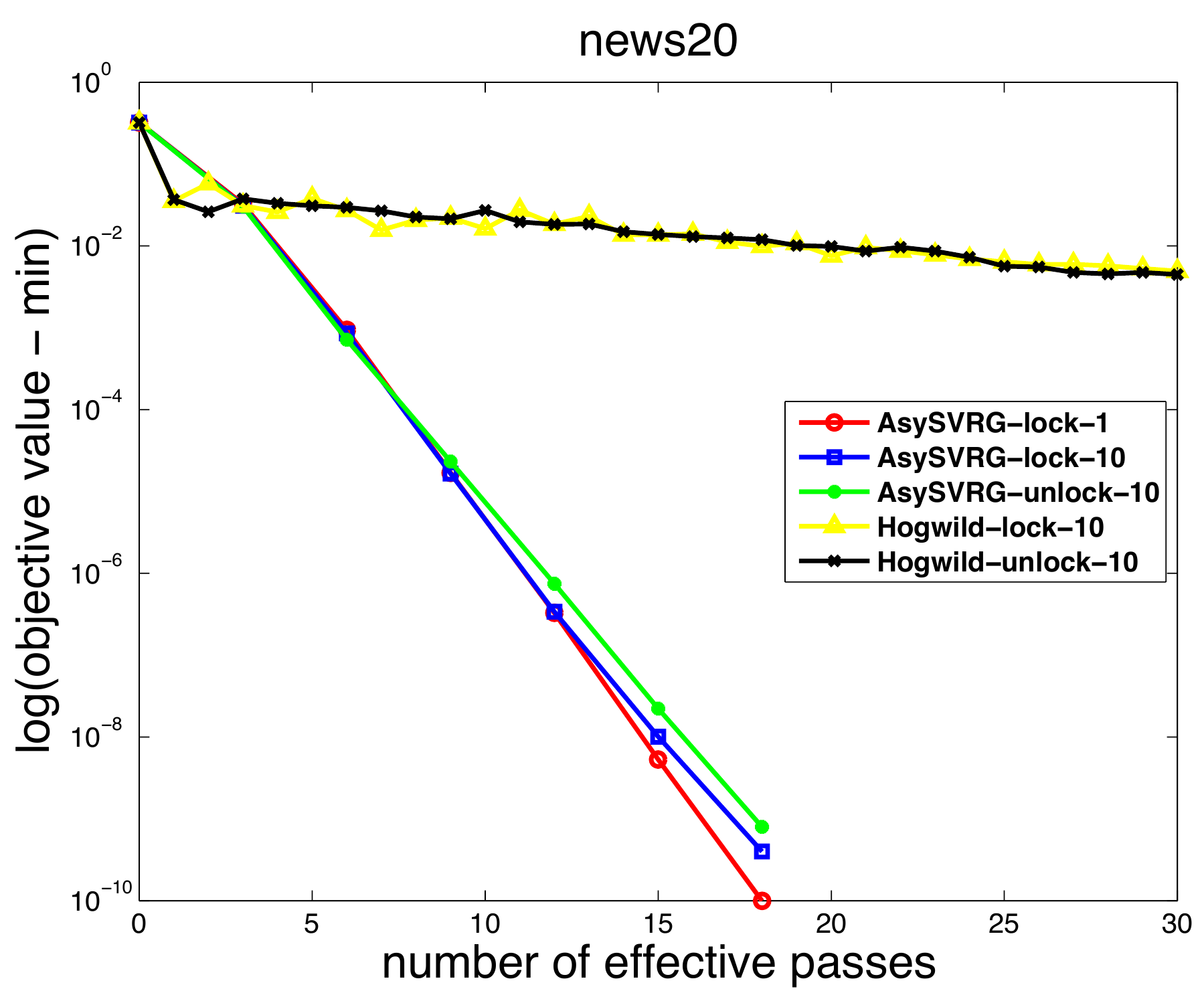}}\hspace{0.5cm}
\end{center}
\vspace{-0.5cm}
\caption{Experimental results. Left: speedup; Right: convergence rate. Please note that in (b), (d) and (f), some curves are overlapped.}
\label{fig}
\end{figure*}

\section{Conclusion}
In this paper, we have proposed a novel asynchronous parallel SGD method, called AsySVRG, for multicore systems. Both theoretical and empirical results show that AsySVRG can outperform other state-of-the-art methods.

\bibliographystyle{plain}
\bibliography{ref}

\newpage
\appendix

\section{Notations for Proof}
For the proof of Theorem \ref{theorem1}.

\begin{proof}
According to (\ref{update rule}), we obtain that
\begin{align}\label{eq}
\EB \left\| \u_{m+1} - \w_* \right\|^2 &= \EB\left\| \u_m - \w_* \right\|^2 - 2\eta \EB \v_m^T (\u_m - \w_*) + \eta^2 \EB \left\| \v_m \right\|^2 \nonumber \\
&=\EB \left\| \u_m - \w_* \right\|^2 - 2\eta \EB \nabla f(\u_{k(m)})^T (\u_m - \w_*) + \eta^2 \EB \left\| \v_m \right\|^2.
\end{align}

For the old gradient, we have
\begin{align}\label{old gradient}
\nabla f(\u_{k(m)})^T (\u_m - \w_*) =& \nabla f(\u_{k(m)})^T (\u_{k(m)} - \w_*) + \nabla f(\u_{k(m)})^T (\u_m - \u_{k(m)}).
\end{align}

Since $f(\w)$ is $L$-smooth, we have
\begin{align}
&\nabla f(\u_{k(m)})^T (\u_{k(m)} - \w_*) \geq f(\u_{k(m)}) - f(\w_*), \nonumber \\
&\nabla f(\u_{k(m)})^T (\u_m - \u_{k(m)}) \geq f(\u_m) - f(\u_{k(m)}) - \frac{L}{2} \left\| \u_m - \u_{k(m)} \right\|^2. \nonumber
\end{align}

Substituting the above inequalities into (\ref{eq}), we obtain
\begin{align}
&\EB \left\| \u_{m+1} - \w_* \right\|^2 + 2\eta \EB(f(\u_m) - f(\w_*)) \nonumber \\
\leq& \left\| \u_m - \w_* \right\|^2 + \eta^2 \EB \left\| \v_m \right\|^2 + \eta L(\left\| \u_m - \u_{k(m)} \right\|^2).
\end{align}
Since
\begin{align}
\left\| \u_m - \u_{k(m)} \right\|^2 \leq 2\sum_{l=k(m)}^{m-1} \left\| \u_{l+1} - \u_l \right\|^2 = 2\eta^2\sum_{l=k(m)}^{m-1} \left\| \v_l \right\|^2 .\nonumber
\end{align}
Taking expectation and using Lemma \ref{lemma1}, we obtain
\begin{align}
& \EB \left\| \u_{m+1} - \w_* \right\|^2 + 2\eta \EB(f(\u_m) - f(\w_*)) \nonumber \\
\leq& \EB\left\| \u_m - \w_* \right\|^2 + \eta^2 \EB \left\| \v_m \right\|^2 + 2\eta^3 L \sum_{l=k(m)}^{m-1} \EB \left\| \v_l \right\|^2 \nonumber \\
\leq& \EB\left\| \u_m - \w_* \right\|^2 + \eta^2 \sum_{l=k(m)}^m \EB\left\| \v_l \right\|^2 \nonumber \\
=& \EB\left\| \u_m - \w_* \right\|^2 + \eta^2 \sum_{l=k(m)}^m \EB q_{k(l)} \nonumber \\
\leq& \EB\left\| \u_m - \w_* \right\|^2 + \rho^\tau \eta^2 \sum_{l=k(m)}^m \EB q_l \nonumber \\
\leq& \EB\left\| \u_m - \w_* \right\|^2 + (\tau + 1) \rho^{2\tau} \eta^2 \EB q_m \nonumber \\
\leq& \EB\left\| \u_m - \w_* \right\|^2 + 4 (\tau + 1) \rho^{2\tau} \eta^2 L\EB(f(\u_m) - f(\w_*) + f(\u_0) - f(\w_*)). \nonumber
\end{align}
The second inequality uses $2L\eta \leq 1$. In the first equality, we use the fact that $\EB\left\| \v_l \right\|^2 = \EB (\EB_{i_m}(\left\| \v_l \right\|^2)) = \EB q_{k(l)}$. The last inequality uses the inequality
$$q_m \leq 4L(f(\u_m) - f(\w_*) + f(\u_0) - f(\w_*)),$$
which has been proved in \cite{DBLP:conf/nips/Johnson013}.
Then summing up from $m=0$ to $\tilde{M}$, and taking $\w_{t+1} = \frac{1}{\tilde{M}} \sum_{m=0}^{\tilde{M}-1} \u_m$ or randomly choosing a $\u_m$ to be $\w_{t+1}$, we can get
\begin{align}
&2\tilde{M} \eta (1 - 2 (\tau + 1) \rho^{2\tau} \eta L)\EB(f(\w_{t+1}) - f(\w_*)) \\
\leq& \EB\left\| \w_t - \w_* \right\|^2 + 4M(\tau + 1) \rho^{2\tau} \eta^2 L \EB(f(\w_t) - f(\w_*)) \nonumber \\
\leq& (\frac{2}{\mu} +  4\tilde{M}(\tau + 1) \rho^{2\tau} \eta^2 L)\EB(f(\w_t) - f(\w_*)). \nonumber
\end{align}
Then, we have
\begin{align}
\EB(f(\w_{t+1}) - f(\w_*)) \leq (\frac{1}{\mu \tilde{M} \eta (1 - 2 (\tau + 1) \rho^{2\tau} \eta L)} +  \frac{2(\tau + 1) \rho^{2\tau} \eta L}{1 - 2 (\tau + 1) \rho^{2\tau} \eta L} )\EB(f(\w_t) - f(\w_*)). \nonumber
\end{align}
Of course, we need $1 - 2 (\tau + 1) \rho^{2\tau} \eta L>0$.
\end{proof}

For the proof of Theorem \ref{theorem:inconsistent}.

\begin{proof}
According to (\ref{update:inconsistent}), we have
\begin{align}
\EB \left\| \u_{m+1} - \w_* \right\|^2 &= \EB\left\| \u_m - \w_* \right\|^2 - 2\eta \EB \hat{\v}_m^T (\u_m - \w_*) + \eta^2 \EB \left\| \hat{\v}_m \right\|^2 \nonumber \\
&=\EB \left\| \u_m - \w_* \right\|^2 - 2\eta \EB \nabla f(\hat{\u}_m)^T (\u_m - \w_*) + \eta^2 \EB \left\| \hat{\v}_m \right\|^2.
\end{align}
Similar to the analysis of (\ref{old gradient}) in Theorem \ref{theorem1}, we can get
\begin{align}
    &\EB \left\| \u_{m+1} - \w_* \right\|^2 + 2\eta \EB(f(\u_m) - f(\w_*)) \nonumber \\
\leq& \left\| \u_m - \w_* \right\|^2 + \eta^2 \EB \left\| \hat{\v}_m \right\|^2 + \eta L\left\| \u_m - \hat{\u}_m \right\|^2 \nonumber \\
\leq& \left\| \u_m - \w_* \right\|^2 + \eta^2 \EB \left\| \hat{\v}_m \right\|^2 + 4L\eta^3 \sum_{l=a(m)}^{m-1} \EB \left\| \hat{\v}_l \right\|^2 \nonumber \\
\leq& \left\| \u_m - \w_* \right\|^2 + \eta^2 \EB \left\| \hat{\v}_m \right\|^2 + 4\tau\rho^\tau L \eta^3 \EB \left\| \hat{\v}_m \right\|^2 \nonumber \\
\leq& \left\| \u_m - \w_* \right\|^2 + \frac{\eta^2 + 4\tau\rho^\tau L \eta^3}{1 - \frac{1}{r} - 4r\tau\rho^\tau\eta^2L^2} \EB\left\| \v_m \right\|^2 \nonumber \\
\leq& \left\| \u_m - \w_* \right\|^2 + \frac{4L\eta^2 + 16\tau\rho^\tau L^2 \eta^3}{1 - \frac{1}{r} - 4r\tau\rho^\tau\eta^2L^2}(f(\u_m) - f(\w_*) + f(\u_0) - f(\w_*)) .\nonumber
\end{align}
The second inequality is the same as the analysis in (\ref{EBhatvm}). The third inequality uses Lemma \ref{estimate:hatq}. The fourth inequality uses Lemma \ref{estimate:hatvm}.

For convenience, we use $c_2 = \frac{4L\eta^2 + 16\tau\rho^\tau L^2 \eta^3}{1 - \frac{1}{r} - 4r\tau\rho^\tau\eta^2L^2}$ and sum the above inequality from $m=0$ to $\tilde{M}-1$, and
 take $\w_{t+1} = \frac{1}{\tilde{M}}\sum_{m=0}^{\tilde{M}-1} \u_{m}$. Then, we obtain
\begin{align}
(2\eta - c_2)M \EB(f(\w_{t+1}) - f(\w_*)) \leq (\frac{2}{\mu} + \tilde{M}c_2) \EB(f(\w_t) - f(\w_*)), \nonumber
\end{align}
which means that
\begin{align}
\EB(f(\w_{t+1}) - f(\w_*)) \leq (\frac{2}{\mu \tilde{M}(2\eta - c_2)} + \frac{c_2}{2\eta - c_2}) \EB(f(\w_t) - f(\w_*)) .\nonumber
\end{align}
\end{proof}

\end{document}